\documentclass{midl} 

\usepackage{amssymb,amsfonts}
\usepackage{algorithm}
\usepackage{makecell}
\usepackage{booktabs}
\usepackage[draft]{changes}
\usepackage{graphicx}
\usepackage{multirow}
\usepackage{footmisc}
\usepackage{caption}
\usepackage{adjustbox}
\usepackage{enumitem}
\usepackage{xcolor}
\setaddedmarkup{\textcolor{black}{#1}}
\colorlet{added}{black} 
\usepackage{mwe} 
\jmlrvolume{-- Under Review}
\jmlryear{2026}
\jmlrworkshop{Full Paper -- MIDL 2026 submission}
\editors{Under Review for MIDL 2026}

\title[Training-Free Zero-Shot 3D MRI Anomaly Detection]{Training-Free Zero-Shot Anomaly Detection in 3D Brain MRI with 2D Foundation Models}

\midlauthor{
  \Name{Tai Le-Gia} \orcid{0009-0009-0547-9317} \Email{giataile@o.cnu.ac.kr}\\
\addr Department of Mathematics, Chungnam National University, South Korea
\AND
\Name{Jaehyun Ahn} \Email{jhahn@cnu.ac.kr}\\
\addr Department of Mathematics, Chungnam National University, South Korea
}

\begin{document}

\maketitle

\begin{abstract}
Zero-shot anomaly detection (ZSAD) has gained increasing attention in medical imaging as a way to identify abnormalities without task-specific supervision, but most advances remain limited to 2D datasets. Extending ZSAD to 3D medical images has proven challenging, with existing methods relying on slice-wise features and vision–language models, which fail to capture volumetric structure. In this paper, we introduce a fully training-free framework for ZSAD in 3D brain MRI that constructs localized volumetric tokens by aggregating multi-axis slices processed by 2D foundation models. These 3D patch tokens restore cubic spatial context and integrate directly with distance-based, batch-level anomaly detection pipelines. The framework provides compact 3D representations that are practical to compute on standard GPUs and require no fine-tuning, prompts, or supervision. Our results show that training-free, batch-based ZSAD can be effectively extended from 2D encoders to full 3D MRI volumes, offering a simple and robust approach for volumetric anomaly detection.
\end{abstract}

\begin{keywords}
magnetic resonance imaging, zero-shot anomaly detection, 3D volumetric representation, image-level detection, voxel-level segmentation
\end{keywords}
\section{Introduction}
Anomaly detection is essential in medical imaging, where early identification of abnormal structures supports diagnosis and treatment planning. Conventional unsupervised anomaly detection (UAD) methods rely on large sets of clean, domain-specific training data, which are costly to obtain for volumetric medical imaging. Zero-shot anomaly detection (ZSAD) offers an appealing alternative by removing the need for supervised training, yet most advances remain limited to 2D images~\citep{WinCLIP, AnomalyCLIP, APRIL-GAN, MuSc, CoDeGraph}. Extending ZSAD to 3D MRI volumes is non-trivial: there are no 3D foundation models, and simple slice-wise features cannot capture full volumetric structure. Recent CLIP-based attempts~\citep{CLIP-based-Marzullo} highlight this difficulty, showing that naive extensions of 2D ZSAD pipelines to 3D yield unstable performance.

Existing ZSAD approaches for 2D images follow two main directions. Text-based methods~\citep{WinCLIP, AnomalyCLIP, APRIL-GAN} use vision--language models to score abnormalities via text prompts, but often require prompt tuning or additional training to achieve satisfactory performance. Batch-based methods~\citep{MuSc, CoDeGraph} instead operate purely on visual tokens extracted by vision transformers and exploit the intrinsic structure and statistics of these tokens across a batch of images. These methods exploit the statistical observations: normal patches consistently find similar counterparts in other images, whereas anomalous patches are rare and distinctive. By performing cross-sample similarity searches, batch-based approaches isolate such outliers without any prompts or supervision. However, extending this paradigm directly to 3D MRI is not straightforward. First, there are currently no general-purpose foundation models (akin to DINOv2 or CLIP) for volumetric data. Second, volumetric images generate orders of magnitude more tokens than 2D images, so naive tokenization causes extreme memory demands and renders mutual similarity computations computationally intractable.

In this paper, we propose a training-free batch-based ZSAD framework for 3D brain MRI to address these limitations. Our approach constructs localized volumetric tokens by aggregating multi-axis 2D foundation-model features into cubic 3D patches. This strategy preserves essential 3D spatial context while drastically reducing the number of tokens per volume, bringing the token count back into a regime where batch-based methods are feasible. To further reduce computational and memory burden, we apply a random projection to the token features, compressing their dimensionality while approximately preserving neighborhood geometry. The resulting compact 3D tokens can then be processed with standard batch-based approaches such as MuSc~\citep{MuSc} and CoDeGraph~\cite{CoDeGraph}, without any fine-tuning, prompts, or task-specific supervision.

Our main contributions are:
\begin{itemize}
  \item We introduce the first practical batch-based ZSAD framework for 3D brain MRI, extending fully training-free principles from 2D to volumetric data.
  \item We propose a multi-axis volumetric tokenization and random projection pipeline that preserves cubic spatial context while enabling tractable mutual similarity computations for 3D volumes.
  \item Through extensive experiments, we show that our method outperforms representative CLIP-based ZSAD baselines and, in some cases, matches or exceeds the performance of supervised methods.
\end{itemize}
\section{Related Work}

\paragraph{Anomaly Detection in Brain MRI.}
Most unsupervised anomaly detection methods for brain MRI rely on reconstruction-based models—such as Autoencoders~\cite{atlason2019unsupervised, baur2021autoencoders, cai2024rethinking}, VQ-VAE~\cite{pinaya2022unsupervised}, GANs~\cite{schlegl2019f}, or diffusion models~\cite{pinaya2022fast, wu2024medsegdiff}—that must be trained on large collections of normal 3D MRI volumes to learn a representation of healthy anatomy. These approaches operate on the assumption that a model trained exclusively on healthy data will fail to accurately reconstruct unseen pathological features, allowing anomalies to be segmented via the residual error. While effective in controlled settings, their behavior depends heavily on the specific distribution of the training set. Consequently, these models are often brittle when deployed across different scanners or acquisition protocols (domain shift).

\paragraph{Zero-Shot Anomaly Detection.}
To eliminate the need for training data, Zero-Shot Anomaly Detection (ZSAD) leverages representations from pre-trained foundation models. The dominant approach aligns visual features with textual descriptions of normality and pathology using vision--language models like CLIP~\citep{WinCLIP, AnomalyCLIP, APRIL-GAN}. However, applying this strategy to medical imaging is problematic due to the significant domain gap and the difficulty of crafting robust clinical text prompts~\citep{CLIP-based-Marzullo}. A parallel emerging paradigm, \textit{batch-based} ZSAD~\citep{MuSc, CoDeGraph}, discards language entirely; instead, it identifies anomalies as statistical outliers within a batch of visual tokens extracted by pure vision encoders (e.g., DINOv2). While effective for 2D industrial inspection, the validity of this batch-centric paradigm for volumetric medical data remains unproven. To the best of our knowledge, this work is the first to demonstrate that batch-based principles can be effectively extended to 3D medical imaging, establishing a viable, training-free path for volumetric anomaly detection.
\paragraph{3D Representation Learning.}
The development of native 3D foundation models is constrained by the scarcity of large-scale volumetric datasets and the high computational cost of training 3D networks. Consequently, recent work has explored adapting pre-trained 2D encoders to 3D data without additional training. A notable example is RAPTOR~\citep{an2025raptor}, which constructs scalable 3D volume-level embeddings by aggregating 2D slice features across multiple axes. Building on this training-free paradigm, we extend multi-axis aggregation to the \emph{local} level: rather than producing a single global descriptor, we generate dense and spatially coherent 3D patch tokens. This design preserves the granular spatial context required for voxel-wise anomaly segmentation.

\section{Method}
\subsection{Batch-Based Anomaly Detection on Collections of Tokens}
\label{sec:batch-framework}
Let the test dataset be \(\mathcal{B} = \{C_1, \dots, C_B\}\), where each collection \(C_i\) consists of a finite set of feature tokens  
\(C_i = \{\mathbf{z}_i^1, \dots, \mathbf{z}_i^N\} \subset \mathbb{R}^D\),  
typically extracted using foundation models such as DINOv2~\cite{DINOv2}. For a query token \(\mathbf{z} \in C_i\), its distance to another collection \(C_j\) (\(j \neq i\)) is defined as the nearest-token distance:

\[
d(\mathbf{z}, C_j) = \min_k \, \Vert \mathbf{z} - \mathbf{z}_j^k\Vert_2.
\]
Sorting these distances across all other collections yields the \emph{Mutual Similarity Vector} (MSV):
\[
\mathcal{D}_{\mathcal{B}}(\mathbf{z}) = [\, d(\mathbf{z})_{(1)}, \dots, d(\mathbf{z})_{(B-1)} \,],
\]
where \(d(\mathbf{z})_{(t)}\) is the \(t\)-th smallest cross-collection distance.

Batch-based methods rest on the \emph{Doppelgänger assumption}: normal structures tend to recur across different samples, whereas abnormal or rare patterns do not. Consequently, normal tokens typically find several close matches in the dataset (resulting in small initial MSV values), while anomalous tokens have fewer such matches and thus higher MSV values (resulting in large MSV values). MuSc~\cite{MuSc} turns this observation into an anomaly score by averaging the first \(K\) entries of the MSV:
\[
a(\mathbf{z}) = \frac{1}{K} \sum_{t=1}^{K} d(\mathbf{z})_{(t)}.
\]
In practice, \(K\) is set to a small fraction of the dataset size (typically 10–30\% of \(B\)), which improves robustness to noise and rare normal patterns. MuSc further aggregates scores from multiple layers and receptive fields, enhancing detection of anomalies at different scales. A collection-level anomaly score is derived by taking the maximum token-level score.

A fundamental challenge in batch-based ZSAD is the presence of \emph{consistent anomalies}: similar anomalies that repeat across multiple collections can make anomalous tokens become mutual nearest neighbors, artificially lowering their scores. CoDeGraph~\cite{CoDeGraph} addresses this by detecting collections that share consistent-anomaly patterns and selectively excluding the suspicious tokens from MSV computation, thereby restoring the validity of rarity-based scoring.

Crucially, this entire pipeline operates on unordered sets of tokens extracted from test samples, and requires no task-specific training or text prompts. However, applying this logic to 3D data requires a mechanism to transform high-dimensional volumetric samples \(V_i\) into discrete, semantically rich token sets \(C_i\). Section~\ref{sec:3dpatch-extraction} introduces our training-free 3D patch extraction method designed to bridge this gap.
\subsection{Multi-Axis 3D-Patch Tokenization}
\label{sec:3dpatch-extraction}
To bridge the gap between continuous volumetric data and the discrete token requirements of the batch-based framework (Sec.~\ref{sec:batch-framework}), we introduce a training-free tokenization pipeline. This process leverages frozen 2D foundation models to extract semantic features while restoring the 3D spatial coherence lost in standard slice-wise approaches. We note that our design is encoder-agnostic: any 2D vision transformer can be used without altering the pipeline (see Appendix~\ref{app:model_agnostic}).

\subsubsection{Axis-wise Extraction and Patch-Aligned Pooling}
Let a preprocessed MRI volume be a cubic grid \(V \in \mathbb{R}^{H \times H \times H}\). We decompose \(V\) along the three anatomical axes—axial, coronal, and sagittal. For a given axis (e.g., axial), the volume is treated as a sequence of \(H\) slices. Each slice \(S_h\) is processed by a frozen 2D encoder \(f(\cdot)\) (e.g., DINOv2) with patch size \(p\). This yields a feature grid for each slice:
\[
S_h \;\xrightarrow{f}\; \big\{ \mathbf{f}_{(h,u,v)} \in \mathbb{R}^{D} : 1 \le u,v \le N_p \big\},
\]
where \(N_p = H/p\) is the grid resolution and \(D\) is the feature dimension. 
Stacking these outputs results in a tensor of size \(H \times N_p \times N_p \times D\). Directly using this representation is computationally intractable for batch pairwise comparisons (e.g., \(\approx 58\) million floating points per axis for a \(224^3\) volume).

To enforce computational efficiency and restore volumetric cubicity, we apply \emph{patch-aligned average pooling}. We group the \(H\) slices into non-overlapping blocks of depth \(p\), matching the encoder's native patch resolution. For a target 3D coordinate \((x,y,z)\) in the downsampled grid \(N_p \times N_p \times N_p\), the feature is aggregated over the corresponding block of slices \(\mathcal{G}_x\):
\begin{equation}
\mathbf{z}^{(\text{axis})}_{(x,y,z)} 
= \frac{1}{p} \sum_{h \in \mathcal{G}_x} \mathbf{f}^{(\text{axis})}_{(h,y,z)}, \quad \text{followed by } \ell_2\text{-normalization}.
\label{eq:3dpatch}
\end{equation}
This operation effectively downsamples the spatial resolution by a factor of \(p\) in the slicing dimension, resulting in a cubic token representing a \(p \times p \times p\) voxel region. This process is repeated for all three axes, permuting coordinates to align them to a unified \((x,y,z)\) grid.

\subsubsection{Random Projection and Multi-View Fusion}

To  further reduce computational cost while preserving geometric structure for batch-based methods, we apply random projection based on the Johnson--Lindenstrauss lemma~\cite{johnson1984extensions}. The lemma ensures that pairwise distances are approximately preserved under low-dimensional embeddings, making random projection well suited to the distance-based nature of our anomaly scoring. Hence, we apply a fixed Gaussian random matrix \(\mathbf{R} \in \mathbb{R}^{D \times k}\) with \(k \ll D\) (e.g., \(k=\added{128}\)) to project the tokens from each axis:
\begin{equation} \label{eq:projection}
\mathbf{v}^{(\text{axis})}_{(x,y,z)} = \mathbf{R}^\top \mathbf{z}^{(\text{axis})}_{(x,y,z)} \in \mathbb{R}^{k}.
\end{equation}
Finally, to integrate complementary anatomical context, we concatenate the projected features from the axial, coronal, and sagittal views at each spatial location:
\begin{equation} \label{eq:final_3d_tokens}
  \mathbf{v}_{(x,y,z)} = \big[ \mathbf{v}^{(\text{ax})}_{(x,y,z)},\, {\mathbf{v}}^{(\text{cor})}_{(x,y,z)},\, {\mathbf{v}}^{(\text{sag})}_{(x,y,z)} \big] \in \mathbb{R}^{3k}.
\end{equation}
Flattening this grid yields the final collection \(C_i = \{ {\mathbf{v}}_i^1, \dots, {\mathbf{v}}_i^{N} \}\) of volume $V_i$, where \(N = N_p^3\) and each $\mathbf{v}^{k}_i$ represents a 3D patch at $(x, y, z)$ in \eqref{eq:final_3d_tokens}. This collection is compact, semantically rich, and spatially localized, satisfying the input requirements for the batch-based anomaly detection described in Sec.~\ref{sec:batch-framework}.
\paragraph{Background Suppression.}
Standard preprocessing (e.g., skull-stripping) leaves large regions of zero-valued background voxels. Including these ``void" tokens is computationally wasteful and introduces artificial redundancy that can distort batch statistics. We therefore utilize the binary brain mask to filter out background tokens prior to processing. This step significantly reduces the computational load of MSV calculations and ensures that the similarity graph constructed in CoDeGraph~\cite{CoDeGraph} is driven solely by biological tissue patterns rather than background correlations.
\paragraph{Framework Integration.}
Once each MRI volume is transformed into a token collection 
\(C_i = \{\mathbf{v}_i^1,\ldots,\mathbf{v}_i^{N}\}\), batch-based anomaly detection (Section~\ref{sec:batch-framework}) is applied directly to these sets. Running MuSc or CoDeGraph on \(C_i\) produces a volumetric anomaly score map of size \(N_p \times N_p \times N_p\), aligned with the cubic grid of 3D patch tokens. This coarse anomaly map is then trilinearly resized to the original resolution \(H \times H \times H\) to obtain voxel-wise scores. Background voxels that were excluded from MSV computation retain an anomaly score of zero in the final output.
\section{Experiments}
\label{sec:experiments}

\subsection{Experimental Setup}
\paragraph{Datasets and Preprocessing}

We evaluate the framework on volumetric \added{Anomaly Classification (AC) and Anomaly Segmentation (AS)} using IXI~\cite{ixi} (healthy) and BraTS-2025 METS~\cite{brats} (tumor). Both T2-weighted and native T1-weighted scans were used. The datasets are split into an 80\% portion used solely for supervised baselines and a 20\% portion reserved for evaluation. The final mixed test batch comprises 180 volumes (115 IXI, 65 BraTS), and inference is performed jointly across the full batch.

All volumes undergo a standardized preprocessing pipeline. Each scan is first registered to the SRI-24 atlas~\cite{rohlfing2010sri24} using CaPTk~\cite{pati2019cancer}, then skull-stripped using HD-BET~\cite{isensee2019automated}. Since skull-stripping produces substantial empty borders, the central brain region is cropped to a cube of \(156^3\) voxels to remove most background. The cropped volume is then resampled to \(224^3\), histogram-standardized~\cite{nyul2000new} using a fixed reference template, and normalized to \([0,1]\). For BraTS segmentations, we follow the common binary setup and treat all voxels  with labels greater than 0 as anomalies.

\paragraph{Implementation Details}
Tokenization follows the multi-axis procedure described in 
Section~\ref{sec:3dpatch-extraction}. A frozen DINOv2-L/14 encoder extracts 
slice features along the axial, coronal, and sagittal planes, and 
patch-aligned pooling reconstructs a cubic grid of local tokens. We use four 
transformer layers (\(6, 12, 18, 24\)), each producing \(16^3\) tokens per 
volume. Batch-based scoring is applied independently to each layer: tokens 
from layer \(L\) across all volumes form a set of collections 
$\{C_i\}_{i=1}^{B}$ on which MSV-based scores are computed. We employ 
CoDeGraph—referred to in this work as CoDeGraph3D—\cite{CoDeGraph}, an 
extension of MuSc~\cite{MuSc} designed to handle consistent anomalies, using 
its default configuration. This yields four voxel-level anomaly maps that are 
averaged to obtain the final anomaly score. All \added{per-axis} tokens are projected to 
128 dimensions using a fixed Gaussian random matrix to keep similarity 
computation tractable. Experiments are performed on a single 
NVIDIA RTX~4070~Ti~Super GPU.

\paragraph{Baseline Methods}

We compare our training-free framework with CLIP-based ZSAD and a representative reconstruction baseline. For CLIP-based models, we follow the protocol of \citet{CLIP-based-Marzullo}, evaluating AnomalyCLIP \cite{AnomalyCLIP} and APRIL-GAN \cite{APRIL-GAN}. These methods operate at the slice level: each 2D slice is scored using a text-based anomaly measure, and slice-wise scores are aggregated across the volume to produce a 3D anomaly prediction. Unlike \citet{CLIP-based-Marzullo}, which considers only axial slices, we extend these methods to all three anatomical planes, aggregating their outputs (after interpolation to \(224 \times 224\) resolution) to match the multi-view design of our approach. We use the official implementations of both models. The models are trained on an industrial AD dataset (MVTec~\cite{mvtec}) at \(224 \times 224\) resolution. Furthermore, we additionally trained CLIP-based methods on BraTS slices to provide a supervised reference for comparison. \added{We also include WinCLIP~\cite{WinCLIP} as a purely zero-shot CLIP-based baseline, since it requires no fine-tuning. As the official implementation is not publicly available, we adapt an open-source PyTorch implementation}\footnote{\url{https://github.com/zqhang/Accurate-WinCLIP-pytorch}} \added{and extend it to 3D by applying the method in a slice-wise manner following the same protocol used for AnomalyCLIP and APRIL-GAN.}

For unsupervised reconstruction, we implement a DAE~\cite{pmlr-v172-kascenas22a}, following the 3D-version configuration of \citet{LIANG2026103763}, using a standard 3D U-Net architecture \cite{xu2024feasibility}. The model is trained exclusively on the IXI training subset. All details for training baseline methods are given in Appendix~\ref{app:baselines}.

\paragraph{Evaluation Metrics}

We follow standard evaluation practice in the anomaly detection community. Patient-level and voxel-level performance are assessed using AUROC and Average Precision \added{(AP)}. \added{Segmentation accuracy is additionally reported using Dice-max (Dice) and IoU, where Dice-max denotes the best Dice score obtained by thresholding the voxel-level anomaly map at the optimal threshold, following standard practice in unsupervised anomaly segmentation.}
\subsection{Results}
\label{sec:results}

\begin{table*}[t]
\centering
\caption{\textbf{Quantitative comparison on T2-weighted MRI.} \textbf{Bold} indicates the best performance within each category (Zero-Shot vs. Reference).}
\label{tab:results_t2w}
\renewcommand{\arraystretch}{1.2}
\setlength{\tabcolsep}{5pt}

\begin{tabular}{l l | cc | cccc}
\toprule
\multirow{2}{*}{\textbf{Method}} & \multirow{2}{*}{\textbf{Training Source}} 
& \multicolumn{2}{c|}{\textbf{Patient-level}} 
& \multicolumn{4}{c}{\textbf{Voxel-level}} \\
\cmidrule(lr){3-4} \cmidrule(lr){5-8}
& & AUROC & AP & AUROC & AP & Dice & IoU \\
\midrule

\multicolumn{8}{l}{\textit{Zero-shot Methods (No medical training)}} \\
\midrule
CoDeGraph3D & --- & \textbf{96.9} & \textbf{92.7} & \textbf{92.2} & \textbf{49.1} & \textbf{41.3} & \textbf{31.6} \\
WinCLIP & --- & 23.2 & 24.5 & 84.9 & 5.8 & 8.1 & 4.4 \\
AnomalyCLIP & Industrial & 36.4 & 28.6 & 91.2 & 11.8 & 14.1 & 8.4 \\
APRIL-GAN & Industrial & 3.5 & 21.2 & 90.0 & 8.7 & 12.7 & 7.3 \\
\midrule

\multicolumn{8}{l}{\textit{Reference Methods (Requires domain training)}} \\
\midrule
AnomalyCLIP & BraTS (Supervised) & 82.3 & 81.8 & 97.9 & 61.7 & 34.9 & 25.3 \\
APRIL-GAN & BraTS (Supervised) & 94.1 & 92.9 & \textbf{98.4} & \textbf{81.2} & \textbf{50.1} & \textbf{40.4} \\
DAE & IXI (Unsupervised) & \textbf{99.8} & \textbf{99.7} & 88.2 & 25.4 & 25.0 & 17.3 \\
\bottomrule
\end{tabular}

\par\medskip 
\begin{minipage}{0.95\linewidth}
\footnotesize
\textbf{Note:} \textit{Training Source} indicates the dataset used for model optimization.
\end{minipage}
\end{table*}

\begin{table*}[t]
\centering
\caption{\textbf{Quantitative comparison on T1-weighted MRI.} \textbf{Bold} indicates the best performance within each category (Zero-Shot vs. Reference).}
\label{tab:results_t1w}
\renewcommand{\arraystretch}{1.2}
\setlength{\tabcolsep}{5pt}

\begin{tabular}{l l | cc | cccc}
\toprule
\multirow{2}{*}{\textbf{Method}} & \multirow{2}{*}{\textbf{Training Source}} 
& \multicolumn{2}{c|}{\textbf{Patient-level}} 
& \multicolumn{4}{c}{\textbf{Voxel-level}} \\
\cmidrule(lr){3-4} \cmidrule(lr){5-8}
& & AUROC & AP & AUROC & AP & Dice & IoU \\
\midrule

\multicolumn{8}{l}{\textit{Zero-shot Methods (No medical training)}} \\
\midrule
CoDeGraph3D & --- & \textbf{97.5} & \textbf{94.1} & \textbf{90.2} & \textbf{33.8} & \textbf{29.5} & \textbf{20.8} \\
WinCLIP & --- & 34.6 & 38.1 & 70.0 & 2.6 & 4.4 & 2.3 \\
AnomalyCLIP & Industrial & 75.1 & 66.7 & \added{84.4} & \added{6.6} & 7.7 & 4.2 \\
APRIL-GAN & Industrial & 19.1 & 26.5 & 79.2 & 3.8 & 6.4 & 3.4 \\
\midrule

\multicolumn{8}{l}{\textit{Reference Methods (Requires domain training)}} \\
\midrule
AnomalyCLIP & BraTS (Supervised) & 84.8 & 84.7 & 96.7 & 52.5 & 30.4 & 21.0 \\
APRIL-GAN & BraTS (Supervised) & 80.9 & 81.8 & \textbf{97.8} & \textbf{74.8} & \textbf{42.1} & \textbf{32.5} \\
DAE & IXI (Unsupervised) & \textbf{100.0} & \textbf{100.0} & 69.1 & 4.3 & 7.6 & 4.3 \\
\bottomrule
\end{tabular}

\par\medskip 
\begin{minipage}{0.95\linewidth}
\footnotesize
\textbf{Note:} \textit{Training Source} indicates the dataset used for model optimization.
\end{minipage}
\end{table*}
\paragraph{Comparison with Zero-Shot Baselines.}
On T2w (Table~\ref{tab:results_t2w}), CoDeGraph3D achieves a patient-level
AUROC of \textbf{96.9\%} and a voxel-level Dice of \textbf{41.3\%}, validating that
batch-based ZSAD is viable and effective for 
3D brain MRI. Existing zero-shot baselines that rely on out-of-domain CLIP fine-tuning 
(AnomalyCLIP, APRIL-GAN) perform poorly in this setting, producing Dice scores below
15\%. These results are consistent with
the findings of \citet{CLIP-based-Marzullo}, who report similarly limited
transfer of industrial CLIP models to 3D brain MRI. A similar trend is observed on T1-weighted MRI (Table~\ref{tab:results_t1w}), where
CoDeGraph3D again substantially outperforms all zero-shot baselines. Beyond accuracy, the
method remains efficient: processing all 180 volumes requires
714 seconds in total (4 seconds per volume) and uses less than 10GB VRAM (details in \tableref{tab:runtime_analysis}), confirming the practicality of our proposed method.
\paragraph{Comparison with Reference Methods.}
To contextualize the zero-shot results, we additionally compare CoDeGraph3D with
reference methods that require supervision. As expected, supervised CLIP-based models
(AnomalyCLIP and APRIL-GAN trained on BraTS) achieve the highest AS performance. Meanwhile, CoDeGraph3D attains comparable segmentation
accuracy without any further training beyond the frozen foundation model. When
compared with the unsupervised DAE trained on IXI normals, CoDeGraph3D lags in
AC performance but clearly outperforms it in AS accuracy
across both modalities. These results highlight that CoDeGraph3D provides a strong
and practical trade-off between performance and training cost in the AC/AS for 3D brain MRI.

\paragraph{\added{Sensitivity to Lesion Scale.}}
\added{In CoDeGraph3D, each token $\mathbf{v}_{(x,y,z)}$ represents a cubic region of size $(p \cdot s)^3$, where $p=14$ is the patch size and $s\approx 0.7\,mm$ is the voxel spacing after preprocessing. In our experimental setup, this corresponds to an effective cube of approximately $(9.75\,\text{mm})^3$ ($\approx 926\,\text{mm}^3$) in the physical world. Lesions substantially smaller than this scale may therefore be partially attenuated by spatial averaging. Nevertheless, as shown in Lemma~\ref{lem:patch_sensitivity}, small but sufficiently distinct anomalies can still remain detectable at the token level. This behavior is confirmed by the lesion-wise true positive rate (LTPR) analysis on BraTS-2025 METS. For lesions with volume $<100\,\text{mm}^3$ (146/288 of all lesions)---which occupies roughly 10\% of the effective cube volume---a non-negligible fraction can still be localized (LTPR = 0.23). In contrast, detection becomes more reliable for lesions larger than the effective cube size (LTPR = 0.83 for volumes $>1000\,\text{mm}^3$).}
\begin{figure*}[t]
\centering
\includegraphics[width=\textwidth]{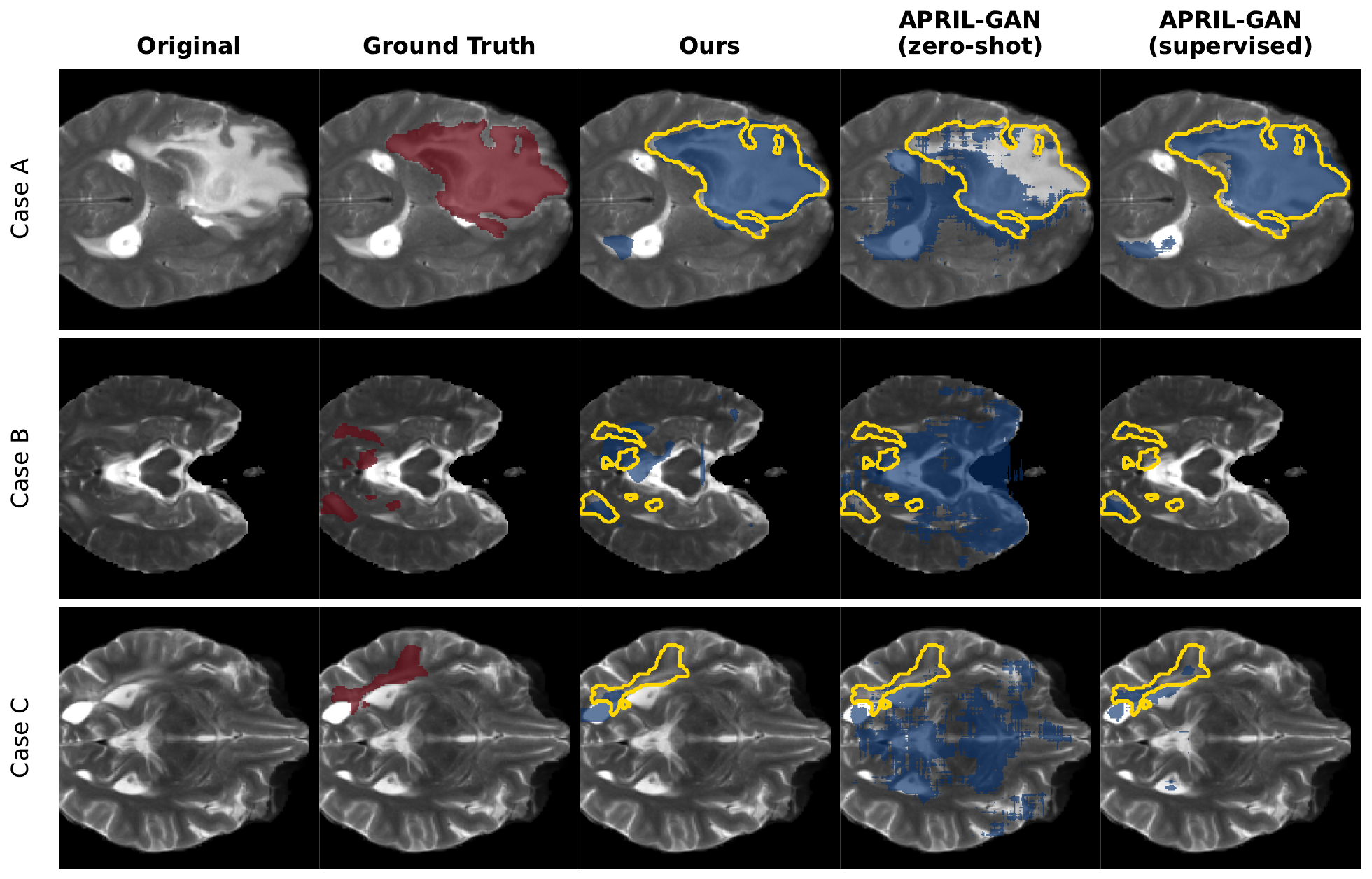}
\caption{\textbf{Anomaly segmentation on 3D T2-weighted BraTS scans.}
Yellow contours show ground-truth boundaries, and blue regions denote predicted anomaly masks. 
Rows illustrate high-, medium-, and low-Dice cases for CoDeGraph3D and APRIL-GAN baselines.}
\label{fig:qualitative_cases}
\end{figure*}
\paragraph{Qualitative Analysis.}
Figure~\ref{fig:qualitative_cases} presents representative anomaly segmentation results. CoDeGraph3D produces spatially coherent and well-localized anomaly maps, effectively avoiding the incoherent noise characteristic of slice-wise baselines. The method successfully delineates the primary extent of lesions with clear contrast against healthy tissue, validating the effectiveness of batch-based ZSAD. However, as expected, sensitivity is reduced in scenarios involving very small, scattered lesions (e.g., punctate metastases), where the coarse resolution of the cubic patch representation may dilute the anomaly signal.
\section{Ablation Studies}
\label{sec:ablation}
We quantitatively verify our tokenization strategy by testing
it against alternative approaches. For all ablations, we report
the results on the T2w modality of BraTS-2025 METS to assess their impact unless specified.
\subsection{Reliability of Random Projections}
We assess the sensitivity of our approach to the target dimensionality $k$ of the random projection $\mathbf{R}$ in \equationref{eq:projection}. Specifically, we test $k \in \{1, 10, 50, 100, 128, 200\}$ for three different seeds (\tableref{tab:ablation_projection}). 
As expected, extremely low dimensions ($k \le 10$) result in poor segmentation accuracy and higher variance, as the projection fails to preserve the pairwise distances required for anomaly detection. 
However, performance stabilizes after $k=50$, with negligible variation observed between $k=128$ and $k=200$ (std of $\text{Dice} < 0.2\%$). 
\added{These results indicate that aggressive dimensionality reduction (an $8\times$ reduction from $D=1024$) is possible with only a marginal impact on stability, making the approach suitable for resource-constrained settings}. This compression is critical for making quadratic batch-based comparisons computationally tractable.
\subsection{Importance of Multi-View Context}
To assess the importance of multi-view processing, we compare models operating on single anatomical planes versus dual-view combinations (Table~\ref{tab:ablation_multiview}). We find that incorporating a second viewpoint consistently improves segmentation accuracy over single-view baselines.
 Among individual views, the Axial view alone yields the strongest individual performance (36.9\% Dice), \added{which may be attributed to the generally cleaner appearance of axial slices in BraTS-2025 METS, with less apparent blurring compared to other orientations}. Notably, the dual-view (axial + coronal) configuration closely approaches the performance of the full tri-axial model (41.1\% vs.\ 41.3\%). This result suggests that while full 3D context is ideal, the aggregation of just two orthogonal planes already provides substantial geometric regularization against slice-wise inconsistency.
\begin{table*}[t]
\centering
\small

\begin{minipage}[t]{0.48\textwidth}
\centering
\setlength{\tabcolsep}{5pt}
\caption{\textbf{Effect of Random Projection ($k$).} Performance stabilizes at $k \ge 50$. Values are mean $\pm$ std over 3 seeds.}
\label{tab:ablation_projection}
\begin{tabular}{lccc}
\toprule
\textbf{Method} & \textbf{$k$} & \textbf{P-AUC} & \textbf{Dice} \\
\midrule
\makecell[l]{\textbf{\added{CoDe}}\\\textbf{\added{Graph3D}}} & 1   & $71.4 \pm 6.8$ & $9.3 \pm 1.2$  \\
                   & 10  & $94.1 \pm 2.2$ & $25.9 \pm 1.0$ \\
                   & 50  & $96.4 \pm 0.5$ & $39.3 \pm 0.3$ \\
                   & 100 & $96.4 \pm 0.3$ & $40.6 \pm 0.4$ \\
                   & 128 & $96.9 \pm 0.3$ & $41.3 \pm 0.3$ \\ 
                   & 200 & $96.7 \pm 0.1$ & $41.5 \pm 0.1$ \\
\bottomrule
\end{tabular}
\end{minipage}
\hfill
\begin{minipage}[t]{0.48\textwidth}
\centering
\setlength{\tabcolsep}{5pt}
\caption{\textbf{Effect of Multi-View Aggregation.} Comparison of single-view and dual-view configurations.}
\label{tab:ablation_multiview}
\begin{tabular}{lccc}
\toprule
\textbf{Method} & \textbf{Viewpoint} & \textbf{P-AUC} & \textbf{Dice} \\
\midrule
 \makecell[l]{\textbf{\added{CoDe}}\\\textbf{\added{Graph3D}}} & A     & 98.0 & 36.9 \\
                   & C     & 87.4 & 34.8 \\
                   & S     & 99.4 & 23.5 \\
                   & C + S & 97.2 & 33.4 \\
                   & A + S & 98.0 & 39.6 \\
                   & A + C & 97.5 & 41.1 \\
                   & \added{A + C + S} & \added{96.9} & \added{41.3} \\
\bottomrule
\end{tabular}
\end{minipage}

\par\medskip
\begin{minipage}{0.48\linewidth}
\footnotesize \textbf{Note.} P-AUC denotes patient-level AUROC.
\end{minipage}
\hfill
\begin{minipage}{0.48\linewidth}
\footnotesize \textbf{Note.} P-AUC denotes patient-level AUROC.
\end{minipage}

\end{table*}
\color{added}
\subsection{Robustness to Batch Size}
\label{app:chunking_ablation}

We evaluate the sensitivity of CoDeGraph3D to the effective batch size used for batch-based comparisons by partitioning the test set into non-overlapping chunks of varying size $B$, with each chunk processed independently. As the batch size decreases, performance degrades gradually rather than abruptly. For moderate batch sizes ($B \ge 30$), both patient-level AUROC and voxel-level Dice remain stable and close to the full-batch setting, demonstrating that the method can be applied sequentially to large datasets without a significant loss in accuracy. 

Notably, CoDeGraph3D remains functional even at even smaller batch sizes ($B = 15$), achieving non-trivial segmentation performance (Dice $\approx 37.5\%$), albeit with increased variance. These results indicate that the proposed method does not critically depend on large batch aggregation and can operate effectively under memory-constrained settings or in sparse-data regimes, where only a limited number of test samples are available.
\begin{table}[h]
\centering
\caption{\textbf{Performance Stability with Chunking.} Patient-level AUROC and voxel-level Dice when the full test set is divided into random non-overlapping chunks of varying sizes.}
\label{tab:chunking_ablation}
\setlength{\tabcolsep}{12pt}
\renewcommand{\arraystretch}{1.2}
\small
\begin{tabular}{c c c c}
\toprule
\textbf{Number of Chunks} & \textbf{Batch Size ($B$)} & \textbf{Patient AUROC} & \textbf{Dice} \\
\midrule
1 (Full Batch) & 180 & \textbf{96.9} & \textbf{41.3} \\
\midrule
2  & 90 & 96.7 & 40.8 \\
4  & 45 & 96.4 & 40.0 \\
6  & 30 & 96.0 & 39.8 \\
12 & 15 & 95.2 & 37.5 \\
\bottomrule
\end{tabular}
\end{table}
\color{added}
\subsection{Ablation on Additional Anomaly Types}
\label{sec:ablation_newdatasets}

To assess the framework's generality, we conduct experiments on gliomas (BraTS-2021 GLI, T2w~\cite{bratsgli,menze2014multimodal,bakas2017advancing}) and stroke lesions (ATLAS R2.0, T1w~\cite{liew2022large}). 
We keep the same 115 IXI normal scans as in the main setting and replace the 65 BraTS-2025 METS cases with 65 randomly sampled volumes from the target dataset. The preprocessing pipeline is identical to the main experiment, except that for ATLAS R2.0, IXI volumes are registered to MNI152 instead of SRI-24 to match ATLAS R2.0 registration.
As shown in Table~\ref{tab:ablation_new_types}, CoDeGraph3D consistently achieves the highest segmentation accuracy across both diffuse gliomas (61.0\% Dice) and vascular stroke lesions (31.6\% Dice).
These results confirm that the proposed batch-based 3D representation effectively generalizes to diverse anomalies.
\begin{table*}[t]
\centering
\small
\setlength{\tabcolsep}{4.5pt}
\caption{\textbf{Generalization to Glioma and Stroke.} Evaluation on BraTS-2021 GLI (T2w) and ATLAS R2.0 (T1w) against 115 IXI normals. \textit{Baselines:} AnomalyCLIP/APRIL-GAN are fine-tuned on MVTec (Industrial); DAE is trained on IXI (Medical-Unsupervised).}
\label{tab:ablation_new_types}
\renewcommand{\arraystretch}{1.1}
\begin{tabular}{l | cccc | cccc}
\toprule
\multirow{2}{*}{\textbf{Method}} 
& \multicolumn{4}{c|}{\textbf{BraTS-2021 GLI} (Tumor)} 
& \multicolumn{4}{c}{\textbf{ATLAS R2.0} (Stroke)} \\
\cmidrule(lr){2-5} \cmidrule(lr){6-9}
& P-AUROC & P-AP & V-AP & Dice 
& P-AUROC & P-AP & V-AP & Dice \\
\midrule
\textbf{CoDeGraph3D} & \textbf{99.4} & \textbf{98.9} & \textbf{51.6} & \textbf{61.0} & 87.2 & 72.6 & \textbf{34.4} & \textbf{31.6} \\
AnomalyCLIP & 68.1 & 57.4 & 17.1 & 22.5 & 53.7 & 47.2 & 4.1 & 4.8 \\
APRIL-GAN   & 26.4 & 31.1 & 13.6 & 18.8 & 17.9 & 23.4 & 4.0 & 6.0 \\
DAE         & 99.2 & 98.5 & 40.7 & 49.1 & \textbf{98.5} & \textbf{93.9} & 9.6 & 13.1 \\
\bottomrule
\end{tabular}
\end{table*}
\color{black}
\section{Discussion and Conclusion}

\paragraph{Limitations.}
A central limitation of CoDeGraph3D arises from its cubic tokenization strategy. Aggregating features over fixed spatial regions is essential for restoring 3D context and enabling tractable batch-based similarity computation, but it inherently constrains localization granularity. Consequently, very small, sparse, or low-contrast lesions may be attenuated by surrounding healthy tissue during feature aggregation, potentially leading to reduced sensitivity. In addition, although projection- and aggregation-based tokenization substantially reduces the dimensionality and number of features involved in similarity computation, the underlying cross-sample similarity calculation still scales quadratically with the number of samples and tokens, which may limit applicability to extremely high-resolution volumes or very large test datasets.

\paragraph{\added{Future Directions.}}
\added{Future work will focus on extending the proposed framework to achieve finer localization and improved scalability. This includes incorporating multi-scale or multi-resolution tokenization to better capture both large and small abnormalities, improving the efficiency of similarity computation to support larger cohorts and higher-resolution volumes. These extensions aim to further enhance the accuracy, efficiency, and practical applicability of training-free batch-based ZSAD in volumetric medical imaging.}

\color{black}
Overall, this paper demonstrates that batch-based zero-shot anomaly detection can be effectively extended to 3D brain MRI without task-specific training, prompts, or domain adaptation. By constructing multi-axis volumetric tokens from frozen 2D foundation models, CoDeGraph3D enables rarity-based anomaly detection to operate in volumetric settings where slice-wise pipelines and text-driven ZSAD methods struggle. Across multiple modalities and anomaly types, the proposed framework consistently outperforms existing zero-shot baselines and achieves competitive voxel-level segmentation accuracy relative to supervised references, establishing CoDeGraph3D as a practical and domain-agnostic framework for training-free anomaly detection in 3D medical imaging.

\midlacknowledgments{This research was supported by Basic Science Research Program through the National Research Foundation of Korea (NRF) funded by the Ministry of Education (No. RS-2023-00244515).}

\bibliography{ref.bib}

@article{liew2022large,
	author = {Liew, Sook-Lei and Lo, Bethany P and Donnelly, Miranda R and Zavaliangos-Petropulu, Artemis and Jeong, Jessica N and Barisano, Giuseppe and Hutton, Alexandre and Simon, Julia P and Juliano, Julia M and Suri, Anisha and others},
	journal = {Scientific data},
	number = {1},
	pages = {320},
	publisher = {Nature Publishing Group UK London},
	title = {A large, curated, open-source stroke neuroimaging dataset to improve lesion segmentation algorithms},
	volume = {9},
	year = {2022}}

@article{menze2014multimodal,
	author = {Menze, Bjoern H and Jakab, Andras and Bauer, Stefan and Kalpathy-Cramer, Jayashree and Farahani, Keyvan and Kirby, Justin and Burren, Yuliya and Porz, Nicole and Slotboom, Johannes and Wiest, Roland and others},
	journal = {IEEE transactions on medical imaging},
	number = {10},
	pages = {1993--2024},
	publisher = {IEEE},
	title = {The multimodal brain tumor image segmentation benchmark (BRATS)},
	volume = {34},
	year = {2014}}

@article{bakas2017advancing,
	author = {Bakas, Spyridon and Akbari, Hamed and Sotiras, Aristeidis and Bilello, Michel and Rozycki, Martin and Kirby, Justin S and Freymann, John B and Farahani, Keyvan and Davatzikos, Christos},
	journal = {Scientific data},
	number = {1},
	pages = {1--13},
	publisher = {Nature Publishing Group},
	title = {Advancing the cancer genome atlas glioma MRI collections with expert segmentation labels and radiomic features},
	volume = {4},
	year = {2017}}

@misc{bratsgli,
	archiveprefix = {arXiv},
	author = {Ujjwal Baid and Satyam Ghodasara and Suyash Mohan and Michel Bilello and Evan Calabrese and Errol Colak and Keyvan Farahani and Jayashree Kalpathy-Cramer and Felipe C. Kitamura and Sarthak Pati and Luciano M. Prevedello and Jeffrey D. Rudie and Chiharu Sako and Russell T. Shinohara and Timothy Bergquist and Rong Chai and James Eddy and Julia Elliott and Walter Reade and Thomas Schaffter and Thomas Yu and Jiaxin Zheng and Ahmed W. Moawad and Luiz Otavio Coelho and Olivia McDonnell and Elka Miller and Fanny E. Moron and Mark C. Oswood and Robert Y. Shih and Loizos Siakallis and Yulia Bronstein and James R. Mason and Anthony F. Miller and Gagandeep Choudhary and Aanchal Agarwal and Cristina H. Besada and Jamal J. Derakhshan and Mariana C. Diogo and Daniel D. Do-Dai and Luciano Farage and John L. Go and Mohiuddin Hadi and Virginia B. Hill and Michael Iv and David Joyner and Christie Lincoln and Eyal Lotan and Asako Miyakoshi and Mariana Sanchez-Montano and Jaya Nath and Xuan V. Nguyen and Manal Nicolas-Jilwan and Johanna Ortiz Jimenez and Kerem Ozturk and Bojan D. Petrovic and Chintan Shah and Lubdha M. Shah and Manas Sharma and Onur Simsek and Achint K. Singh and Salil Soman and Volodymyr Statsevych and Brent D. Weinberg and Robert J. Young and Ichiro Ikuta and Amit K. Agarwal and Sword C. Cambron and Richard Silbergleit and Alexandru Dusoi and Alida A. Postma and Laurent Letourneau-Guillon and Gloria J. Guzman Perez-Carrillo and Atin Saha and Neetu Soni and Greg Zaharchuk and Vahe M. Zohrabian and Yingming Chen and Milos M. Cekic and Akm Rahman and Juan E. Small and Varun Sethi and Christos Davatzikos and John Mongan and Christopher Hess and Soonmee Cha and Javier Villanueva-Meyer and John B. Freymann and Justin S. Kirby and Benedikt Wiestler and Priscila Crivellaro and Rivka R. Colen and Aikaterini Kotrotsou and Daniel Marcus and Mikhail Milchenko and Arash Nazeri and Hassan Fathallah-Shaykh and Roland Wiest and Andras Jakab and Marc-Andre Weber and Abhishek Mahajan and Bjoern Menze and Adam E. Flanders and Spyridon Bakas},
	eprint = {2107.02314},
	primaryclass = {cs.CV},
	title = {The RSNA-ASNR-MICCAI BraTS 2021 Benchmark on Brain Tumor Segmentation and Radiogenomic Classification},
	url = {https://arxiv.org/abs/2107.02314},
	year = {2021}}

@article{isensee2019automated,
	author = {Isensee, Fabian and Schell, Marianne and Pflueger, Irada and Brugnara, Gianluca and Bonekamp, David and Neuberger, Ulf and Wick, Antje and Schlemmer, Heinz-Peter and Heiland, Sabine and Wick, Wolfgang and others},
	journal = {Human brain mapping},
	number = {17},
	pages = {4952--4964},
	publisher = {Wiley Online Library},
	title = {Automated brain extraction of multisequence MRI using artificial neural networks},
	volume = {40},
	year = {2019}}

@inproceedings{pati2019cancer,
	author = {Pati, Sarthak and Singh, Ashish and Rathore, Saima and Gastounioti, Aimilia and Bergman, Mark and Ngo, Phuc and Ha, Sung Min and Bounias, Dimitrios and Minock, James and Murphy, Grayson and others},
	booktitle = {International MICCAI Brainlesion Workshop},
	organization = {Springer},
	pages = {380--394},
	title = {The cancer imaging phenomics toolkit (CaPTk): technical overview},
	year = {2019}}

@article{rohlfing2010sri24,
	author = {Rohlfing, Torsten and Zahr, Natalie M and Sullivan, Edith V and Pfefferbaum, Adolf},
	journal = {Human brain mapping},
	number = {5},
	pages = {798--819},
	publisher = {Wiley Online Library},
	title = {The SRI24 multichannel atlas of normal adult human brain structure},
	volume = {31},
	year = {2010}}

@inproceedings{wu2024medsegdiff,
	author = {Wu, Junde and Fu, Rao and Fang, Huihui and Zhang, Yu and Yang, Yehui and Xiong, Haoyi and Liu, Huiying and Xu, Yanwu},
	booktitle = {Medical Imaging with Deep Learning},
	organization = {PMLR},
	pages = {1623--1639},
	title = {Medsegdiff: Medical image segmentation with diffusion probabilistic model},
	year = {2024}}

@inproceedings{pinaya2022fast,
	author = {Pinaya, Walter HL and Graham, Mark S and Gray, Robert and Da Costa, Pedro F and Tudosiu, Petru-Daniel and Wright, Paul and Mah, Yee H and MacKinnon, Andrew D and Teo, James T and Jager, Rolf and others},
	booktitle = {International Conference on Medical Image Computing and Computer-Assisted Intervention},
	organization = {Springer},
	pages = {705--714},
	title = {Fast unsupervised brain anomaly detection and segmentation with diffusion models},
	year = {2022}}

@inproceedings{cai2024rethinking,
	author = {Cai, Yu and Chen, Hao and Cheng, Kwang-Ting},
	booktitle = {International Conference on Medical Image Computing and Computer-Assisted Intervention},
	organization = {Springer},
	pages = {544--554},
	title = {Rethinking autoencoders for medical anomaly detection from a theoretical perspective},
	year = {2024}}

@inproceedings{atlason2019unsupervised,
	author = {Atlason, Hans E and Love, Askell and Sigurdsson, Sigurdur and Gudnason, Vilmundur and Ellingsen, Lotta M},
	booktitle = {Medical Imaging 2019: Image Processing},
	organization = {SPIE},
	pages = {372--378},
	title = {Unsupervised brain lesion segmentation from MRI using a convolutional autoencoder},
	volume = {10949},
	year = {2019}}

@article{baur2021autoencoders,
	author = {Baur, Christoph and Denner, Stefan and Wiestler, Benedikt and Navab, Nassir and Albarqouni, Shadi},
	journal = {Medical image analysis},
	pages = {101952},
	publisher = {Elsevier},
	title = {Autoencoders for unsupervised anomaly segmentation in brain MR images: a comparative study},
	volume = {69},
	year = {2021}}

@article{pinaya2022unsupervised,
	author = {Pinaya, Walter HL and Tudosiu, Petru-Daniel and Gray, Robert and Rees, Geraint and Nachev, Parashkev and Ourselin, Sebastien and Cardoso, M Jorge},
	journal = {Medical Image Analysis},
	pages = {102475},
	publisher = {Elsevier},
	title = {Unsupervised brain imaging 3D anomaly detection and segmentation with transformers},
	volume = {79},
	year = {2022}}

@inproceedings{el2024probing,
	author = {El Banani, Mohamed and Raj, Amit and Maninis, Kevis-Kokitsi and Kar, Abhishek and Li, Yuanzhen and Rubinstein, Michael and Sun, Deqing and Guibas, Leonidas and Johnson, Justin and Jampani, Varun},
	booktitle = {Proceedings of the IEEE/CVF Conference on Computer Vision and Pattern Recognition},
	pages = {21795--21806},
	title = {Probing the 3d awareness of visual foundation models},
	year = {2024}}

@inproceedings{xu2024feasibility,
	author = {Wentian Xu and Matthew Moffat and Thalia Seale and Ziyun Liang and Felix Wagner and Daniel Whitehouse and David Menon and Virginia Newcombe and Natalie Voets and Abhirup Banerjee and Konstantinos Kamnitsas},
	booktitle = {Medical Imaging with Deep Learning},
	title = {Feasibility and benefits of joint learning from {MRI} databases with different brain diseases and modalities for segmentation},
	url = {https://openreview.net/forum?id=z0r388Sbv3},
	year = {2024}}

@article{LIANG2026103763,
	author = {Ziyun Liang and Xiaoqing Guo and Wentian Xu and Yasin Ibrahim and Natalie Voets and Pieter M. Pretorius and J. Alison Noble and Konstantinos Kamnitsas},
	doi = {https://doi.org/10.1016/j.media.2025.103763},
	issn = {1361-8415},
	journal = {Medical Image Analysis},
	pages = {103763},
	title = {IterMask3D: Unsupervised anomaly detection and segmentation with test-time iterative mask refinement in 3D brain MRI},
	url = {https://www.sciencedirect.com/science/article/pii/S1361841525003093},
	volume = {107},
	year = {2026}}

@inproceedings{pmlr-v172-kascenas22a,
	author = {Kascenas, Antanas and Pugeault, Nicolas and O'Neil, Alison Q.},
	booktitle = {Proceedings of The 5th International Conference on Medical Imaging with Deep Learning},
	editor = {Konukoglu, Ender and Menze, Bjoern and Venkataraman, Archana and Baumgartner, Christian and Dou, Qi and Albarqouni, Shadi},
	month = {06--08 Jul},
	pages = {653--664},
	publisher = {PMLR},
	series = {Proceedings of Machine Learning Research},
	title = {Denoising Autoencoders for Unsupervised Anomaly Detection in Brain MRI},
	url = {https://proceedings.mlr.press/v172/kascenas22a.html},
	volume = {172},
	year = {2022}}

@article{CLIP-based-Marzullo,
	author = {Marzullo, Aldo and Cappa, Nicol{\`o} and Morellini, Matteo and Ranzini, Marta Bianca Maria},
	date = {2025/11/10},
	doi = {10.1007/s10916-025-02272-2},
	id = {Marzullo2025},
	isbn = {1573-689X},
	journal = {Journal of Medical Systems},
	number = {1},
	pages = {156},
	title = {Generalist Models in Specialized Domains: Evaluating Contrastive Language-image Pre-training for Zero-shot Anomaly Detection in Brain MRI},
	url = {https://doi.org/10.1007/s10916-025-02272-2},
	volume = {49},
	year = {2025}}

@article{schlegl2019f,
	author = {Schlegl, Thomas and Seeb{\"o}ck, Philipp and Waldstein, Sebastian M and Langs, Georg and Schmidt-Erfurth, Ursula},
	journal = {Medical image analysis},
	pages = {30--44},
	publisher = {Elsevier},
	title = {f-AnoGAN: Fast unsupervised anomaly detection with generative adversarial networks},
	volume = {54},
	year = {2019}}

@article{johnson1984extensions,
	author = {Johnson, William B and Lindenstrauss, Joram and others},
	journal = {Contemporary mathematics},
	number = {189-206},
	pages = {1},
	title = {Extensions of Lipschitz mappings into a Hilbert space},
	volume = {26},
	year = {1984}}

@article{an2025raptor,
	author = {An, Ulzee and Jeong, Moonseong and Lee, Simon A and Gorla, Aditya and Yang, Yuzhe and Sankararaman, Sriram},
	journal = {arXiv preprint arXiv:2507.08254},
	title = {Raptor: Scalable train-free embeddings for 3d medical volumes leveraging pretrained 2d foundation models},
	year = {2025}}

@article{nyul2000new,
	author = {Ny{\'u}l, L{\'a}szl{\'o} G and Udupa, Jayaram K and Zhang, Xuan},
	journal = {IEEE transactions on medical imaging},
	number = {2},
	pages = {143--150},
	publisher = {IEEE},
	title = {New variants of a method of MRI scale standardization},
	volume = {19},
	year = {2000}}

@misc{ixi,
	author = {IXI},
	title = {IXI Dataset},
	url = {https://brain-development.org/ixi-dataset/},
	urldate = {2025-12-11}}

@article{brats,
	author = {Maleki, Nazanin and Amiruddin, Raisa and Moawad, Ahmed W and Yordanov, Nikolay and Gkampenis, Athanasios and Fehringer, Pascal and Umeh, Fabian and Chukwurah, Crystal and Memon, Fatima and Petrovic, Bojan and others},
	journal = {arXiv preprint arXiv:2504.12527},
	title = {Analysis of the MICCAI Brain Tumor Segmentation--Metastases (BraTS-METS) 2025 Lighthouse Challenge: Brain Metastasis Segmentation on Pre-and Post-treatment MRI},
	year = {2025}}

@article{CoDeGraph,
	author = {Tai Le Gia and Jaehyun Ahn},
	issn = {2835-8856},
	journal = {Transactions on Machine Learning Research},
	title = {On the Problem of Consistent Anomalies in Zero-Shot Industrial Anomaly Detection},
	url = {https://openreview.net/forum?id=o2MRb5QZ34},
	year = {2025}}

@article{DINOv2,
	author = {Oquab, Maxime and Darcet, Timoth{\'e}e and Moutakanni, Th{\'e}o and Vo, Huy and Szafraniec, Marc and Khalidov, Vasil and Fernandez, Pierre and Haziza, Daniel and Massa, Francisco and El-Nouby, Alaaeldin and others},
	journal = {arXiv preprint arXiv:2304.07193},
	title = {Dinov2: Learning robust visual features without supervision},
	year = {2023}}

@inproceedings{mvtec,
	author = {Bergmann, Paul and Fauser, Michael and Sattlegger, David and Steger, Carsten},
	booktitle = {Proceedings of the IEEE/CVF conference on computer vision and pattern recognition},
	pages = {9592--9600},
	title = {MVTec AD--A comprehensive real-world dataset for unsupervised anomaly detection},
	year = {2019}}

@article{APRIL-GAN,
	author = {Chen, Xuhai and Han, Yue and Zhang, Jiangning},
	journal = {arXiv preprint arXiv:2305.17382},
	title = {A Zero-/Few-Shot Anomaly Classification and Segmentation Method for CVPR 2023 VAND Workshop Challenge Tracks 1\&2: 1st Place on Zero-shot AD and 4th Place on Few-shot AD},
	year = {2023}}

@inproceedings{MuSc,
	author = {Li, Xurui and Huang, Ziming and Xue, Feng and Zhou, Yu},
	booktitle = {The Twelfth International Conference on Learning Representations},
	title = {Musc: Zero-shot industrial anomaly classification and segmentation with mutual scoring of the unlabeled images},
	year = {2024}}

@inproceedings{WinCLIP,
	author = {Jeong, Jongheon and Zou, Yang and Kim, Taewan and Zhang, Dongqing and Ravichandran, Avinash and Dabeer, Onkar},
	booktitle = {Proceedings of the IEEE/CVF Conference on Computer Vision and Pattern Recognition (CVPR)},
	month = {June},
	pages = {19606-19616},
	title = {WinCLIP: Zero-/Few-Shot Anomaly Classification and Segmentation},
	year = {2023}}

@article{AnomalyCLIP,
	author = {Zhou, Qihang and Pang, Guansong and Tian, Yu and He, Sheng and Chen, Jinghui},
	journal = {In The Twelfth International Conference on Learning Representations},
	title = {AnomalyCLIP: Object-agnostic prompt learning for zero-shot anomaly detection},
	year = {2023}}

\newpage
\appendix

\section{Using Alternate 2D ViT Encoders}
\label{app:model_agnostic}

To demonstrate that CoDeGraph3D is adaptable to different foundation models, we replaced the default DINOv2 encoder with the CLIP visual encoder (ViT-L/14@336px) while keeping all other hyperparameters fixed.

\begin{table}[h]
\centering
\caption{\textbf{Effect of Backbone Choice on BraTS T2-weighted.} Patient-level AUROC and voxel-level Dice of CoDeGraph3D using different 2D ViT encoders.}
\label{tab:backbone_comparison}
\setlength{\tabcolsep}{10pt}
\small
\begin{tabular}{l c c}
\toprule
\textbf{Backbone} & \textbf{Patient AUROC (\%)} & \textbf{Dice (\%)} \\
\midrule
CLIP        & 95.5 & 36.6 \\
{DINOv2} & {96.9} & {41.3} \\
\bottomrule
\end{tabular}
\end{table}
As shown in~\tableref{tab:backbone_comparison}, the framework remains effective with CLIP, achieving a Dice score of 36.6\%—significantly higher than the slice-wise CLIP baselines reported in the main text. \tableref{tab:backbone_comparison} also aligns with \citet{CoDeGraph}, \citet{el2024probing} and \citet{DINOv2}, which determined that DINOv2 yields more informative local features for general-purpose downstream tasks.

\section{Scalability and Batch Size Analysis}
\subsection{Computational Efficiency}

We evaluate the runtime and memory consumption of CoDeGraph3D across a range of batch sizes (\tableref{tab:runtime_analysis}). All experiments were conducted on a single NVIDIA RTX~4070~Ti~Super GPU (16GB VRAM) using the default setting in Section~\ref{sec:experiments}. Unlike 2D batch-based ZSAD pipelines, where pairwise similarity computation with complexity $O\big(B^2 N^2 k\big)$ is often the primary bottleneck, our batch-based 3D formulation shifts most of the computational burden to the volumetric token construction step. This occurs because constructing volumetric tokens requires applying the 2D backbone to every slice along all three anatomical axes, while our design deliberately constrains both the number of 3D tokens per volume~$N$ and the projected feature dimension~$k$.

\begin{table}[h]
\centering
\caption{\textbf{Computational Cost Analysis.} Breakdown of runtime and memory usage for increasing batch sizes. \textit{Extraction} includes encoder forward pass and 3D tokenization; \textit{Scoring} includes random projection, batch-graph construction, and voxel-level anomaly scoring.}
\label{tab:runtime_analysis}
\setlength{\tabcolsep}{8pt}
\renewcommand{\arraystretch}{1.2}
\small
\begin{tabular}{c c c c c c}
\toprule
\textbf{Batch Size} & \textbf{Extraction} & \textbf{Scoring} & \textbf{Total Time} & \textbf{Avg Time} & \textbf{Peak VRAM} \\
\textbf{($B$)} & (s) & (s) & (s) & (s / vol) & (GB) \\
\midrule
30  & 84.9  & 14.2  & 99.1  & 3.30 & 7.16 \\
60  & 169.5 & 37.6  & 207.1 & 3.45 & 7.16 \\
90  & 254.4 & 73.8  & 328.1 & 3.65 & 7.69 \\
120 & 339.2 & 111.6 & 450.9 & 3.76 & 7.86 \\
150 & 423.2 & 150.4 & 573.5 & 3.82 & 8.72 \\
180 & 506.6 & 207.8 & 714.4 & 3.97 & 9.69 \\
\bottomrule
\end{tabular}
\end{table}

Peak memory usage remains below 10GB for all settings, confirming that CoDeGraph3D can be executed on widely available hardware. The per-volume runtime varies only modestly across batch sizes, indicating that the scoring component—despite its $O(B^{2}N^{2}k)$ complexity—does not impose significant overhead relative to the cost of constructing 3D patch tokens.

\section{Implementation Details of Baseline Methods}
\label{app:baselines}
\subsection{2D CLIP-Based Methods}
\added{In this appendix, we describe the configurations used for WinCLIP~\cite{WinCLIP}, AnomalyCLIP~\cite{AnomalyCLIP} and APRIL-GAN~\cite{APRIL-GAN}. 
For WinCLIP, we strictly follow the official settings~\cite{WinCLIP}, employing the ViT-B/16+ backbone with an input resolution of $240 \times 240$ and the standard prompt ensemble.} For AnomalyCLIP and APRIL-GAN, we use a frozen ViT-L/14-336 CLIP encoder, pretrained by OpenAI. 
Only the adapter layers are optimized. For the industrial setting, we train new checkpoints
on MVTec-AD at \(224 \times 224\) resolution to match our preprocessing pipeline. For the
BraTS supervised setting, we apply the same preprocessing steps used in our main
experiments (Section~\ref{sec:experiments}), extract axial, coronal, and sagittal slices, and
retain at most ten tumor-containing slices per axis per volume (yielding a total of 7,819 slices from 263 BraTS training subjects).

\added{For patient-level anomaly classification, voxel- or slice-level anomaly scores must be aggregated into a single subject-level score. For fine-tunable CLIP-based methods (AnomalyCLIP and APRIL-GAN), we empirically observe that using the maximum value of the predicted 3D anomaly map yields the most stable and strongest AC performance. Therefore, to ensure consistency across all CLIP-based baselines, we report AC metrics using voxel-wise max aggregation in the main paper. For WinCLIP, it operates without pixel-level fine-tuning and relies on CLIP representations optimized for image-level classification. As a result, aggregating slice-wise anomaly scores provides a better patient-level score. For transparency, we additionally report WinCLIP patient-level results using slice-wise aggregation in Table~\ref{tab:winclip_slice}.}
  
\begin{table}[h]
\centering
\caption{Training configuration for CLIP-based baselines.}
\label{tab:clip_settings}
\begin{tabular}{lcc}
\toprule
Setting & AnomalyCLIP & APRIL-GAN \\
\midrule
Backbone       & ViT-L-14-336 & ViT-L-14-336 \\
Input size     & \(224 \times 224\) & \(224 \times 224\) \\
Batch size     & 8 & 8 \\
Epochs         & 15 & 15 \\
Learning rate  & 0.001 & 0.001 \\
Feature layers & [24] & [6, 12, 18, 24] \\
\bottomrule
\end{tabular}
\end{table}

\begin{table}[ht]
\centering
\caption{\textbf{WinCLIP Patient-Level Performance with Slice-wise Aggregation.}
Patient-level anomaly classification results obtained by aggregating
slice-wise CLS-token anomaly scores (max over slices).}
\label{tab:winclip_slice}
\renewcommand{\arraystretch}{1.15}
\setlength{\tabcolsep}{8pt}
\small
\begin{tabular}{l | cc}
\toprule
\textbf{Dataset} & \textbf{AUROC} & \textbf{AP} \\
\midrule
BraTS-2025 METS (T1w) & 86.9 & 78.7 \\
BraTS-2025 METS (T2w) & 58.4 & 42.8 \\
BraTS-2021 GLI (T2w)  & 75.4 & 68.8 \\
ATLAS R2.0 (T1w)      & 62.5 & 51.5 \\
\bottomrule
\end{tabular}
\end{table}
\subsubsection*{Discussion of the performance of CLIP-based.}
We acknowledge that our reported performance for CLIP-based models trained on BraTS is
higher than the values reported in \citet{CLIP-based-Marzullo}, although the two studies
agree on the poor performance of models fine-tuned on industrial datasets. With the
exception of potential implementation differences—which we cannot assess because the
authors did not release their training code—we suspect that the primary source of
discrepancy lies in the sampling strategy. In particular, our implementation trains only on
tumor-containing axial slices and already achieves strong performance (patient-level
AUROC 86.3\%, Dice 48.4\%) even before incorporating coronal or sagittal views. As
CLIP-based methods are not the focus of this work, these supervised results are included
only as reference points to contextualize the performance of our training-free batch-based
ZSAD framework.

\subsection{3D Denoising Autoencoder}
The DAE baseline follows the 3D configuration of \citet{LIANG2026103763}, implemented as
a 3D U-Net with skip connections following \citet{xu2024feasibility}. Each input volume is
centrally cropped to a cube of size \(160^3\) and then Z-normalized. Training is performed
using only the IXI healthy subset (462 samples). The main hyperparameters are summarized
in Table~\ref{tab:dae_settings}.

\begin{table}[h]
\centering
\caption{Configuration of the 3D DAE baseline.}
\label{tab:dae_settings}
\begin{tabular}{lc}
\toprule
Setting & Value \\
\midrule
Input crop size & \(160^3\) \\
Voxel spacing & \(1.0\,\text{mm}\) isotropic \\
Z-normalization & Yes \\
Noise level & Gaussian, \(\sigma = 3.0\) \\
Noise resolution & \(20 \times 20 \times 20\) \\
Epochs & 100 \\
Batch size & 2 \\
Optimizer & Adam \\
Learning rate & \(1 \times 10^{-3}\) \\
Loss & L2 reconstruction \\
\bottomrule
\end{tabular}
\end{table}

\color{added}
\section{\added{Patch-Level Sensitivity of 3D Tokens to Local Anomalies}}
\label{app:patch_sensitivity}

In this section, we provide a simple formal argument showing that
the proposed 3D patch tokens remain sensitive to sufficiently
distinctive local anomalies, even when the anomaly occupies only a
small fraction of a patch. To simplify the exposition, we focus solely
on the effect of patch-level averaging and temporarily ignore feature
normalization and random projection.

We consider a single anatomical view and a single transformer layer,
and omit axis and layer superscripts for clarity. Along the depth
direction, a cubic patch consists of $p$ slices, indexed by
$\mathcal{P} = \{1,\dots,p\}$. The corresponding patch token is
obtained by averaging frozen encoder features
$\{\mathbf{u}_t(\mathbf{x}) \in \mathbb{R}^D : t \in \mathcal{P}\}$:
\begin{equation}
  \label{eq:patch_feature_definition_app}
  \mathbf{z}_P(\mathbf{x})
  = \frac{1}{p} \sum_{t \in \mathcal{P}} \mathbf{u}_t(\mathbf{x})
  \;\in\; \mathbb{R}^D.
\end{equation}

We consider two volumes $\mathbf{x}^N$ (normal) and $\mathbf{x}^A$
(anomalous). Let $\mathcal{A} \subset \mathcal{P}$ denote the subset of
slices affected by the anomaly.
\begin{lemma}[Patch-Level Sensitivity to Local Anomalies]
\label{lem:patch_sensitivity}
Let $\mathbf{x}^N$ and $\mathbf{x}^A$ be two volumes that differ only
within a patch of $p$ slices, and let
$\mathcal{A} \subset \mathcal{P}$ denote the anomalous subset with
fraction $\alpha = |\mathcal{A}| / p$. Assume the encoder features
satisfy:
\begin{enumerate}[label=(\roman*), leftmargin=*]
\item The average slice-wise feature difference over the anomalous slices is lower bounded:
  \begin{equation}
    \label{eq:avg_contrast_assumption_app}
    \left\|
      \frac{1}{|\mathcal{A}|}
      \sum_{t \in \mathcal{A}}
        \big(
          \mathbf{u}_t(\mathbf{x}^A)
          - \mathbf{u}_t(\mathbf{x}^N)
        \big)
    \right\|
    \;\ge\; \Delta_0
  \end{equation}
  for some $\Delta_0 > 0$.
\item The feature difference outside the anomalous region is uniformly
  bounded:
  \begin{equation}
    \label{eq:outside_bound_assumption_app}
    \big\|
      \mathbf{u}_t(\mathbf{x}^A)
      - \mathbf{u}_t(\mathbf{x}^N)
    \big\|
    \;\le\; \varepsilon,
    \qquad
    \forall\, t \in \mathcal{P}\setminus\mathcal{A},
  \end{equation}
  for some $\varepsilon \ge 0$.
\end{enumerate}
Then the patch-level feature difference satisfies
\begin{equation}
  \label{eq:patch_diff_lower_bound_app}
  \big\|
    \mathbf{z}_P(\mathbf{x}^A)
    - \mathbf{z}_P(\mathbf{x}^N)
  \big\|
  \;\ge\;
  \alpha \, \Delta_0
  \;-\;
  (1-\alpha)\,\varepsilon.
\end{equation}
In particular, if $\alpha \Delta_0 >> (1-\alpha)\varepsilon$, the
anomalous patch token is strictly separated from the normal one.
\end{lemma}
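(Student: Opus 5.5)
The plan is to write the patch-level difference as a sum of an anomalous and a non-anomalous contribution and apply the reverse triangle inequality. Set $\boldsymbol{\delta}_t = \mathbf{u}_t(\mathbf{x}^A) - \mathbf{u}_t(\mathbf{x}^N)$ for $t \in \mathcal{P}$. By linearity of the averaging in \eqref{eq:patch_feature_definition_app},
\begin{equation*}
  \mathbf{z}_P(\mathbf{x}^A) - \mathbf{z}_P(\mathbf{x}^N)
  = \frac{1}{p}\sum_{t\in\mathcal{A}} \boldsymbol{\delta}_t
  + \frac{1}{p}\sum_{t\in\mathcal{P}\setminus\mathcal{A}} \boldsymbol{\delta}_t
  =: \mathbf{a} + \mathbf{b}.
\end{equation*}

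Next, I rewrite the first term using $|\mathcal{A}| = \alpha p$:
\begin{equation*}
  \mathbf{a} = \alpha \cdot \frac{1}{|\mathcal{A}|}\sum_{t\in\mathcal{A}}\boldsymbol{\delta}_t .
\end{equation*}
Assumption (i), \eqref{eq:avg_contrast_assumption_app}, then gives $\|\mathbf{a}\| \ge \alpha\Delta_0$. For the second term, the triangle inequality and assumption (ii), \eqref{eq:outside_bound_assumption_app}, give
\begin{equation*}
  \|\mathbf{b}\| \le \frac{1}{p}\,\bigl|\mathcal{P}\setminus\mathcal{A}\bigr|\,\varepsilon = (1-\alpha)\varepsilon .
\end{equation*}

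The reverse triangle inequality $\|\mathbf{a}+\mathbf{b}\| \ge \|\mathbf{a}\| - \|\mathbf{b}\|$ then yields \eqref{eq:patch_diff_lower_bound_app} directly. The final remark follows at once: if $\alpha\Delta_0 > (1-\alpha)\varepsilon$, the right-hand side is strictly positive, so the two tokens are separated.

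There is no real obstacle here. The only points needing care are the edge cases. If $\mathcal{A}$ is empty, assumption (i) is undefined, so I assume $|\mathcal{A}| \ge 1$, as implicit in $\Delta_0 > 0$. If $\mathcal{A} = \mathcal{P}$, the $\varepsilon$-term simply vanishes. I would also note that the bound is deliberately stated without $\ell_2$-normalization and projection, as the section's setup allows.
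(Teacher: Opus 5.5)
Your proof is correct and matches the paper's argument: you split the averaged difference into the sums over $\mathcal{A}$ and $\mathcal{P}\setminus\mathcal{A}$, apply the reverse triangle inequality, and bound the two pieces by $\alpha\Delta_0$ and $(1-\alpha)\varepsilon$ using assumptions (i) and (ii). Your notes on the edge cases $\mathcal{A}=\emptyset$ and $\mathcal{A}=\mathcal{P}$ are a sensible addition that the paper leaves implicit.
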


\begin{proof}
Define
$\Delta_t
= \mathbf{u}_t(\mathbf{x}^A) - \mathbf{u}_t(\mathbf{x}^N)$. By linearity,
\begin{equation}
  \mathbf{z}_P(\mathbf{x}^A)
  - \mathbf{z}_P(\mathbf{x}^N)
  =
  \frac{1}{p}
  \left(
    \sum_{t \in \mathcal{A}} \Delta_t
    +
    \sum_{t \in \mathcal{P}\setminus\mathcal{A}} \Delta_t
  \right).
\end{equation}
Applying the triangle inequality yields
\begin{equation}
  \big\|
    \mathbf{z}_P(\mathbf{x}^A)
    - \mathbf{z}_P(\mathbf{x}^N)
  \big\|
  \;\ge\;
  \frac{1}{p}
  \left\|
    \sum_{t \in \mathcal{A}} \Delta_t
  \right\|
  -
  \frac{1}{p}
  \left\|
    \sum_{t \in \mathcal{P}\setminus\mathcal{A}} \Delta_t
  \right\|.
\end{equation}

For the anomalous term,
\[
\left\|
  \sum_{t \in \mathcal{A}} \Delta_t
\right\|
=
|\mathcal{A}|
\left\|
  \frac{1}{|\mathcal{A}|}
  \sum_{t \in \mathcal{A}} \Delta_t
\right\|
\ge
|\mathcal{A}|\,\Delta_0
=
p \alpha \Delta_0,
\]
by~\eqref{eq:avg_contrast_assumption_app}. For the complement,
\[
\left\|
  \sum_{t \in \mathcal{P}\setminus\mathcal{A}} \Delta_t
\right\|
\le
\sum_{t \in \mathcal{P}\setminus\mathcal{A}}
\|\Delta_t\|
\le
(p - |\mathcal{A}|)\varepsilon
=
p(1-\alpha)\varepsilon,
\]
using~\eqref{eq:outside_bound_assumption_app}. Substitution completes
the proof.
\end{proof}

\paragraph{Remark (Normalization and Random Projection).}
In practice, patch features are $\ell_2$-normalized and optionally
projected to a lower-dimensional space. On norm-bounded sets, the map
$\mathbf{z} \mapsto \mathbf{z} / \|\mathbf{z}\|$ is Lipschitz, so the
separation bound in Lemma~\ref{lem:patch_sensitivity} is preserved up
to a constant factor after normalization. Furthermore, if a random
projection
$\mathbf{v}_P = \mathbf{R}^\top \mathbf{z}_P \in \mathbb{R}^k$
is applied, the Johnson--Lindenstrauss lemma guarantees that for any
$0 < \vartheta < 1$, choosing
$k = O(\vartheta^{-2} \log M)$ preserves all pairwise distances among
$M$ patch tokens up to a factor $(1 \pm \vartheta)$ with high
probability. Consequently, the scale-dependent separation established
above is retained after both normalization and projection.
\end{document}